\documentclass[preprint,12pt,authoryear]{elsarticle}

\usepackage{amssymb}
\usepackage{amsmath}
\usepackage{amsthm}

\usepackage{algorithm}
\usepackage{algorithmic}
\usepackage[dvipsnames]{xcolor}
\usepackage{float}
\usepackage{bbm}

\usepackage{hyperref}
\hypersetup{colorlinks=true,
linkcolor=blue,
anchorcolor=blue,
citecolor=blue}

\newtheorem{theorem}{Theorem}
\newtheorem{proposition}{Proposition}
\newtheorem{assumption}{Assumption}

\newcommand{\cX}{\mathcal{X}}

\newcommand{\cI}{\mathcal{I}}

\newcommand{\bX}{\mathbf{X}}
\newcommand{\bx}{\mathbf{x}}
\newcommand{\bS}{\mathbf{S}}

\newcommand{\bi}{\mathbf{1}}

\newcommand{\E}{\mathbb{E}}
\newcommand{\R}{\mathbb{R}}

\renewcommand{\P}{\mathbb{P}}

\usepackage{lineno}

\begin{document}

\begin{frontmatter}



\title{Reliable Real-Time Value at Risk Estimation via Quantile Regression Forest with Conformal Calibration} 


\author[1,2]{Du-Yi Wang}
\author[1]{Guo Liang}
\author[3]{Kun Zhang}
\author[2]{Qianwen Zhu}

\affiliation[1]{organization={Institute of Statistics and Big Data, Renmin University of China},
            state={Beijing},
            country={China}}

\affiliation[2]{organization={Department of Decision Analytics and Operations, City University of Hong Kong},
            city={Kowloon},
            state={Hong Kong},
            country={China}}

\affiliation[3]{organization={School of Information, Renmin University of China},
            state={Beijing},
            country={China}}
            
\begin{abstract}
Rapidly evolving market conditions call for real-time risk monitoring, but its online estimation remains challenging. 
In this paper, we study the online estimation of one of the most widely used risk measures, Value at Risk (VaR).  Its accurate and reliable estimation is essential for timely risk control and informed decision-making.
We propose to use the quantile regression forest in the offline-simulation-online-estimation (OSOA) framework. Specifically, the quantile regression forest is trained offline to learn the relationship between the online VaR and risk factors, and real-time VaR estimates are then produced online by incorporating observed risk factors. To further ensure reliability, we develop a conformalized estimator that calibrates the online VaR estimates.
To the best of our knowledge, we are the first to leverage conformal calibration to estimate real-time VaR reliably based on the OSOA formulation.
Theoretical analysis establishes the consistency and coverage validity of the proposed estimators.
Numerical experiments confirm the proposed method and demonstrate its effectiveness in practice.
\end{abstract}


\begin{keyword}
Value at Risk\sep quantile regression forest \sep stochastic simulation \sep real-time estimation \sep conformal calibration


\end{keyword}

\end{frontmatter}



\section{Introduction}

Risk management is central to the daily operations of financial institutions, where accurate real-time risk measurement supports trading limits, margin requirements, capital allocation, and regulatory compliance \citep[see, e.g.,][]{jiang2020online,huang2022nonparametric,valle2025portfolio}. 
A famous example is J.P. Morgan's ``4:15 report,'' which requires firm-wide risk aggregation within 15 minutes after market close to manage overnight exposures and inform next-day trading decisions \citep{jorion2006,jiang2020online}. 
Notably, \citet{jiang2020online} refer to this setting as the online risk monitoring problem.

In this online monitoring context, the risk measure, a function of portfolio losses, is computed after observing the current risk factors, such as stock prices, interest rates, and exchange rates. 
Thus the computation can be viewed as estimating a conditional risk measure, i.e., a risk measure conditional on observed risk factors.
\citet{jiang2020online} adopt the offline-simulation-online-estimation (OSOA) framework \citep{hong2019offline}. 
{Specifically, offline, one simulates pairs of risk factors and portfolio losses to learn their functional relationship; then, online, one estimates risk by inputting real-time risk factors into the learned relationship.}
In fact, different conditional risk measures correspond to different functional relations. For example, \citet{jiang2020online} study the probability of large losses conditional on real-time risk factors, and propose to use logistic regression to learn the functional relation.
In this paper, we focus on Value at Risk (VaR), one of the most widely used risk measures, and its online estimation with theoretical justification.

VaR is defined as a quantile of the probability distribution of the portfolio loss at a given confidence level. Accordingly, the online VaR estimation is essentially the conditional quantile estimation, and, further, under the OSOA framework, it amounts to learn a conditional quantile function, that is, quantile regression.
Moreover, beyond accuracy and timeliness, the reliability of risk measures is particularly crucial, especially in the context of VaR \citep[see, e.g.,][]{BCBS1996Backtesting,Campbell2006}.
Reliability differs from accuracy because it distinguishes underestimation from overestimation for risk measures.
For example, over 200 trading days, a $99\%$ daily VaR should produce approximately two exceptions on average, where an exception is recorded when the trading outcome exceeds the estimate.
One or three exceptions may result in the same evaluation errors under symmetric accuracy metrics; however, in practice, one exception ({{VaR overestimation}}) is preferable to three ({{VaR underestimation}}) and is therefore considered more reliable. 
This preference arises from the fact that underestimation can trigger supervisory responses, including increases in the multiplication factor applicable to the internal models capital requirement, and in more severe cases, requirements to revise the model.
By contrast, VaR overestimation leads to less exceptions and is less likely to elicit supervisory action \citep{BCBS1996Backtesting}.

{In this paper, our goal is to estimate VaR in real time with theoretical guarantees.
The problem is challenging due to three aspects: {accuracy}, {timeliness}, and {reliability}.
First, to address accuracy, we adopt quantile regression forests (QRF) \citep{meinshausen2006quantile} to learn the conditional quantile function.
Note that large-scale portfolios induce high-dimensional risk factors \citep{broadie2015regression}. 
Moreover, since the mapping from risk factors to portfolio losses is highly nonlinear \citep{broadie2015regression}, the associated conditional quantile is inherently complex.
Thus, QRF is appropriate because it performs well in high-dimensional structures and can capture complex conditional quantile surfaces \citep{gnecco2024extremal}.
In addition, QRF has the advantages of requiring little tuning and possessing relatively well-understood statistical properties, compared to other machine learning methods \citep{GRF2019,grinsztajn2022tree}.
Second, timeliness is achieved by applying the OSOA framework, since prediction via the learned function is sufficient to meet the online time limit.
Third, to ensure reliability, we propose to incorporate conformal calibration into QRF, which is a technique that has been increasingly studied in the machine learning \citep{angelopoulos2023conformal} and large language models \citep{Cherian2024,Mohri2024} literature in recent years .
The calibration recovers the target coverage level by correcting the QRF estimator using a held-out calibration set that empirically characterizes the error of the original QRF estimator.

In our algorithm, we train the QRF model offline on simulated pairs of risk factors and portfolio losses, and then estimate VaR online by evaluating the fitted model at the observed risk factor.
Specifically, for QRF, in the training phase, it constructs a weighted empirical conditional distribution using the neighborhood observations in the leaf node across trees, and in the prediction phase, it estimates conditional quantiles from this distribution.
Furthermore, after training the QRF model, we compute residuals relative to the QRF estimate on a calibration set and adjust the original estimate by the corresponding empirical quantile of these residuals, yielding the conformalized estimate.
Our theoretical analysis establishes consistency for the proposed QRF estimator and provides coverage guarantees for the conformalized estimator.

We summarize our contributions as follows.
\begin{enumerate}
    \item 
    We propose a QRF estimator for the online VaR estimation problem. 
    Since online VaR corresponds to a conditional quantile, we formulate the problem as a quantile regression task within the OSOA framework and solve it using QRF.
    The OSOA framework ensures timeliness by moving time-consuming computation to the offline stage, so online estimation requires only fast evaluation of the learned regression model. 
    Moreover, QRF enhances accuracy, as it is well suited for complex and high-dimensional conditional quantile functions.

    \item We further develop a conformalized QRF estimator that offers improved reliability.
    Conformal calibration quantifies the exceedance error of QRF and compensates for this error to restore the coverage level.
    We compute calibration residuals relative to the QRF estimate on a held-out set, and correct the estimate by an empirical quantile of these residuals, so the resulting conformalized estimate attains the target level.
    To the best of our knowledge, despite that conformal prediction has been widely studied in machine learning \citep{angelopoulos2023conformal}, its potential in risk measurement remains underexplored. 
    In this paper, we highlight a paradigm that leverages conformal calibration to recover the target coverage and thereby achieve the reliability of risk measures, using the OSOA-based formulation as a bridge between quantile regression and online VaR estimation.

    \item We provide a theoretical analysis of the performance of two proposed estimators. 
    We establish pointwise and $L^2$ consistency for the QRF estimator, which indicates that as the offline sample size grows, the estimator asymptotically converges to the true conditional quantile both for each fixed risk factor and on average over risk factors.
    Moreover, we show coverage guarantees for the conformal QRF estimator. 
    The estimator yields finite-sample, model-free marginal coverage validity, that is, for any real-time risk factors, the corresponding portfolio loss falls below the estimate with at least the target probability, for any finite calibration sample size and any regression models.
    Further, this guarantee strengthens from marginal to conditional validity as the offline sample size increases.
    These theoretical results ensure the accuracy and reliability of our estimates, thereby making subsequent managerial decisions more informed and convincing.
    \item We conduct numerical experiments to verify the performance of our method.
    We use comprehensive metrics, including mean root integrated squared error, mean pinball loss, and mean coverage rate, to thoroughly evaluate the estimators on a realistic portfolio risk measurement example.
    Numerical results are provided to demonstrate the effectiveness of our method and to validate the theoretical analysis of the proposed estimators. 
\end{enumerate}
}

\subsection{Literature Review}
Our work is related to three lines of literature.

\textbf{Offline simulation online estimation.}
The OSOA framework is proposed by \citet{hong2019offline}, and has motivated a growing literature \citep{jiang2020online,cai2020online,jiang2024real,luo2024reliable,keslin2025ranking,zhang2025conditional,lin2025contextual}.
Among this body of work, most online applications are optimization tasks, whereas only a few address portfolio risk measure estimation problems.
Specifically, \citet{jiang2020online} use logistic regression to estimate the probability of large losses and to classify the risk category. They theoretically show that both the probability of a large deviation of the estimated probability from its true value and the probability of misclassification converge exponentially fast as the offline sample size goes to infinity.
Moreover, \citet{cai2020online} apply natural gradient boosting to learn the conditional distribution, and then estimate different types of risk measures. 
Nonetheless, they do not establish consistency for the estimators since gradient boosting methods are used \citep{duan2020ngboost,velthoen2023gradient}.
In this paper, we focus on online VaR estimation and establish theoretical guarantees.

\textbf{Quantile regression.}
Quantile regression is introduced by \citet{koenker1978regression}, while early parametric quantile regression may introduce additional bias when the conditional quantile surface is complex \citep{koenker2017quantile,gnecco2024extremal}. Nonparametric kernel \citep{yu1998local} and spline methods \citep{koenker1994quantile} are more flexible but degrade quickly in moderate to high dimensions due to the curse of dimensionality \citep{gnecco2024extremal}.
Furthermore, advances in machine learning have led to a surge in quantile regression methods, including gradient boosting \citep{friedman2001greedy,velthoen2023gradient}, neural network \citep{taylor2000quantile}, and random forest \citep{meinshausen2006quantile, GRF2019, gnecco2024extremal}. 
These methods perform well in situations with high-dimensional covariates and complex conditional quantile surfaces \citep{gnecco2024extremal}.

\textbf{Conformalized quantile regression.}
Conformal prediction is a general methodology for constructing prediction intervals with finite-sample coverage guarantees \citep[see, e.g.,][]{angelopoulos2023conformal}, and has been applied in large language models for uncertainty quantification \citep{Cherian2024,Mohri2024}.
Recent work by \citet{romano2019conformalized} introduces conformalized quantile regression (CQR), which produces the predition intervals that adapt to local variability in highly heteroscedastic settings.
Importantly, the conformal framework is agnostic to the underlying quantile regression methods and can therefore be naturally integrated with QRF.
Further theoretical analyses of CQR, including pointwise coverage properties and comparisons among different procedures, are provided in \citet{sesia2020comparison}. In this paper, we incorporate the conformal calibration technique into QRF to ensure the reliability of VaR estimators and establish the coverage guarantees.

The remainder of the paper is organized as follows. Section~\ref{sec:problem} formulates the real-time VaR estimation problem.
Section~\ref{sec:algorithm} presents the offline learning and online estimation algorithms based on QRF.
Section~\ref{sec:CGRF} develops the conformal QRF estimator.
Section~\ref{sec:theoretical} provides theoretical guarantees for the proposed estimators.
Section~\ref{sec:numerical} reports numerical results, and Section~\ref{sec:conclusion} concludes the paper.

\section{Problem Formulation}
\label{sec:problem}

Let $\bS(t) = (S_1(t),\ldots,S_d(t))^{\top}\in \R^d$ for $0\leq t\leq T$, denote the price dynamics of market variables at time $t$, such as interest rates, stock prices, exchange rates, and other tradable assets.
Let $\mathcal{F}_t$ be the filtration generated by the market evolution up to time $t$. 
We define three critical time points satisfying $0<u<\tau<T$ in the following.
\begin{itemize}
    \item $T$: the maturity of derivative instruments in the portfolio (e.g., options).
    \item $\tau$: the risk horizon at which the portfolio risk is to be evaluated.
    \item $u$: the real-time monitoring time when risk estimates are required.
\end{itemize}

Let $L(\tau)\in\mathbb{R}$ denote the portfolio loss at the risk horizon $\tau$, and let $\bX(t)= (X_1(t),\ldots,X_{\nu}(t))^{\top}\in\mathbb{R}^\nu$ for $\nu\geq d$ denote the risk factors at time $t$, $0\leq t\leq T$, so that $\bX(\tau)$ represents the risk factors at the risk horizon.
The portfolio typically contains derivative securities with nonlinear payoffs. Therefore, $L(\tau)$ is an expectation of (discounted) loss of a portfolio at maturity conditional on risk factors $\bX(\tau)$. By the Markovian property of $\bS(t)$, $\bX(\tau)$ can be constructed by $\bS(t)_{t\in[0,\tau]}$.
For example, in addition to $\bS(\tau)$, $\bX(\tau)$ may also include path-dependent state variables, such as the running maximum of the underlying price for barrier options or the running average for Asian options, as in \cite[see, e.g.,][]{lai2024simulating}.

At monitoring time $u$, we observe a realization of risk factors, denoted by $\bx(u)= (x_1(u),\ldots,x_{\nu}(u))^{\top}\in\mathbb{R}^\nu$, based on the available information $\mathcal{F}_u$. Then the online risk measure problem is to estimate the conditional risk measure of $L(\tau)$ given $\bX(u)=\bx(u)$:
\begin{align*}
    f_u(\bx(u)) = \rho\left(L(\tau) \mid \bX(u)=\bx(u)\right),
\end{align*}
where $\rho(\cdot)$ denotes a generic risk measure functional. In this paper, we are interested in online VaR estimation, where the target function is the conditional $\alpha$-quantile of $L(\tau)$ given $\bX(u) = \bx(u)$ with confidence level $\alpha\in(0,1)$:
\begin{align*}
    v_\alpha(\bx(u)) = f_u(\bx(u))
    & = Q_\alpha(L(\tau) \mid \bX(u) = \bx(u)) \nonumber\\ 
    & = \inf \left\{y \in \R: \P\left(L(\tau) \leq y \mid \bX(u) = \bx(u)\right) \geq \alpha\right\},
\end{align*}
where $Q_\alpha(\cdot \mid \cdot)$ denotes the conditional $\alpha$-quantile operator. 

Here we provide intuition for this target function to explain why the problem is nontrivial.
The portfolio loss function $L(\tau)$ is nonlinear and has no closed forms \citep{broadie2015regression,hong2017kernel,jiang2020online,wang2024smooth}.
Moreover, our target function is a conditional quantile of $L(\tau)$, which is inherently more difficult to estimate, since conditioning reduces effective sample size locally and tail quantiles further amplify data scarcity.

\section{The QRF Estimator}
\label{sec:algorithm}

To address real-time estimation challenges, we adopt the OSOA paradigm. The paradigm consists of two stages: an offline stage for heavy computation and an online stage for fast evaluation \citep{hong2019offline}. In the following, we provide a detailed description of each stage.

\begin{enumerate}
    \item \textbf{The Offline Stage: Simulation and Learning.}\\
    We simulate $n$ independent sample paths $\bS_i(t)_{t\in[0,\tau]}$, $i=1,\ldots,n$, from the underlying stochastic model, and meanwhile obtain the corresponding portfolio losses $L_i(\tau)$ by nested simulation.
    Specifically, given each generated path $\bS_i(t)_{t\in[0,\tau]}$, we sample $m$ independent inner paths $\bS_{i,j}(t)_{t\in[\tau,T]}$, $j=1,\ldots,m$, calculate the terminal payoff at time $T$ and the corresponding loss $l_{i,j}(\tau)$, and then compute the portfolio loss $L_i(\tau)$ using the average $\sum_{j=1}^m l_{i, j}(\tau)/m$.
    This procedure yields a dataset $\{(\bX_i(u),L_i(\tau))\}_{i=1}^n$. 
    \\
    Based on this data set, we train a conditional quantile regression model to learn the mapping from $\bX(u)$ to the conditional $\alpha$-quantile of $L(\tau)$, denoted by $\hat{f}_u(\cdot)$.

    \item \textbf{The Online Stage: Estimation.}\\
    After observing the real-world risk factor $\bx(u)$, we input it into the learned model, and return $\hat{f}_u(\bx(u))$ as the real-time VaR estimate.
\end{enumerate}

We make several remarks on this algorithm.
First, details of the nested simulation procedure for computing $L_i(\tau)$ can be found in \citet{gordy2010nested} and \citet{zhang2022bootstrap}, while the construction of $\bX_i(u)$ could follow \citet{lai2024simulating}.
Second, we do not account for the error from using a finite number of inner samples. This is reasonable because computational time is abundant during the offline stage, allowing us to allocate a sufficient budget to the inner simulations, which makes the inner error nearly negligible \citep{jiang2020online,cai2020online}.
Third, to learn the model ${f}_u(\cdot)$, we adopt QRF \citep{meinshausen2006quantile} as a quantile regression alternative within the OSOA framework. 
In the following, we present the detailed procedure of QRF.

In the offline stage, to obtain $\hat{f}_u(\cdot)$, we construct an ensemble of $B$ regression trees ${T(\theta_b)}_{b=1}^B$, where $\theta_b$ collects all randomness in the construction of tree $b$, using bootstrap resampling and randomized splitting rules.
In the online stage, the new data point $\bx(u)$ is propagated down each tree $T(\theta_b)$ to a leaf node $l(\bx(u),\theta_b)$ with the associated region $R_{l(\bx(u),\theta_b)}$. For each training sample $\bX_i(u)$, $i=1,\ldots,n$, its tree weight is defined as
\begin{align*}
    w_i(\bx(u),\theta_b) = \frac{\bi\{\bX_i(u)\in R_{l(\bx(u),\theta_b)}\}}{\#\{j:\bX_j(u)\in R_{l(\bx(u),\theta_b)}\}},
\end{align*}
where $\#\{\cdot\}$ denotes the cardinality of a set and $\mathbf{1}\{\cdot\}$ denotes the indicator function. Averaging across all trees then yields the forest weight
\begin{align*}
    w_i(\bx(u))=\frac{1}{B}\sum_{t=1}^B w_i(\bx(u),\theta_b).
\end{align*}
Therefore, the conditional $\alpha$-quantile, i.e., the VaR at confidence level $\alpha$ , is estimated as
\begin{align*}
    \hat{v}_\alpha(\bx(u))
    = \hat{f}_{u}(\bx(u))
    = \inf\left\{y\in\mathbb{R}:
    \sum_{i=1}^n w_i(\bx(u))\mathbf{1}\{L_i(\tau)\le y\}\ge\alpha
    \right\}.
\end{align*}
We refer to this estimator as the QRF estimator.

\section{The Conformal QRF Estimator}
\label{sec:CGRF}

The QRF estimator is designed to achieve accurate prediction, but does not distinguish between underestimation and overestimation.
However, in practice, overestimation is often preferred to underestimation.
For example, when estimated VaR is used to determine margin requirements, holding a larger margin buffer is generally preferable; otherwise, the institution may face substantial downside risk due to an insufficient capital cushion.
In a word, slight overestimation is often acceptable, whereas underestimation is undesirable.

Motivated by this consideration, we further develop an estimator to meet the coverage criteria. 
The main idea comes from CQR \citep{romano2019conformalized}, which provides a calibration step that adjusts a base quantile estimator to achieve a specified coverage level.
Our algorithm implements this idea as follows.
In the offline stage, given the dataset $\{(\bX_i(u), L_i(\tau))\}_{i=1}^n$, we split the index set into a training set and a calibration set disjointly, denoted by $\cI_1$ and $\cI_2$, respectively. 
The quantile regression model (e.g., QRF) is trained using only the data indexed by $\cI_1$ to obtain $\hat{v}_{\alpha}(\cdot)$ or say $\hat{f}_u(\cdot)$.
We then calculate the conformity scores on the calibration set $\cI_2$,
\begin{align*}
    E_i = L_i(\tau) - \hat{v}_{\alpha}(\bX_i(u)), \quad i\in \cI_2,
\end{align*}
and define $q_{\alpha}(E,\cI_2)$ as the empirical $\alpha$-quantile of the conformity scores $\{E_i\}_{i\in \cI_2}$. 
In the online stage, given a new observation $\bx(u)$, the conformal QRF estimator is given by
\begin{align*}
\hat{v}^{\mathrm{c}}_{\alpha}(\bx(u))=\hat{v}_{\alpha}(\bx(u))+q_{\alpha}(E,\cI_2).
\end{align*}

We offer several remarks on this estimator.
This procedure can be considered as a flexible calibration step applied after the conditional quantile regression, which conformalizes the initial regression estimate using the calibration set to improve coverage validity.
Moreover, due to its model-free nature, the calibration idea can be combined with a broad class of quantile regression methods beyond QRF.

\section{Theoretical Guarantees}
\label{sec:theoretical}

For notational convenience, we denote $\bX = \bX(u)$, $\bx = \bx(u)$, and $Y = L(\tau)$, ignoring the time index, which is not involved in our analysis.
We write $F(y \mid \bx)=\mathbb{P}(Y \leq y \mid \bX=\bx)$.
Moreover, $g(n)=o(f(n))$ stands for $g(n)/f(n)\to 0$, as $n\to\infty$. 
Furthermore, for two sequences of random variables $\left\{U_n\right\}$ and $\left\{V_n\right\}$, $U_n=o_p\left(V_n\right)$ denotes that $U_n / V_n \xrightarrow{p} 0$, as $n\to\infty$.
To facilitate analysis for the estimators, we make the following assumptions.

\begin{assumption}
\label{ass1}
    The support of $\bX$, denoted by $\cX\subset \R^{\nu}$, is bounded, and the density of $\bX$ is positive and bounded from above and below by positive constants.
\end{assumption}
Assumption \ref{ass1} is made about the distribution of risk factors, and is used in the literature such as \citet{wang2024smooth}. 

\begin{assumption}
\label{ass2}
Denote the size of a leaf node $l$ from a tree constructed with parameter $\theta$ by $k_\theta(l)=\#\left\{i \in\{1, \ldots, n\}: \bX_i \in R_{l(\bx, \theta)}\right\}$.
As $n\to \infty$, $\max _{l, \theta} k_\theta(l)=o(n)$ and $1 / \min _{l, \theta} k_\theta(l)=o(1)$.
\end{assumption}
Assumption~\ref{ass2} imposes two conditions on the leaf nodes. 
Specifically, the proportion of observations in any leaf node relative to all observations vanishes as $n$ grows, and the minimum number of observations in each leaf node diverges to infinity.

\begin{assumption}
\label{ass3}
Let $r$ be an arbitrary node in the tree, with $n_r$ observations. 
If $r$ is split, then there exists a constant $C_0>0$ such that, for any coordinate $m\in\{1,\ldots,\nu\}$, the probability that dimension $m$ is selected as the split variable is at least $C_0$.
Moreover, there exists a constant $\gamma\in(0,1/2]$ such that, whenever node $r$ is split, each resulting child node $r_c$ satisfies $n_{r_c}/ r_v \ge \gamma$.
\end{assumption}

Assumption~\ref{ass3} imposes two standard regularity conditions for leaf nodes in tree-based methods. 
The first condition ensures the probability of selecting any feature for splitting is not too small. 
The second condition ensures that the resulting child nodes do not have highly imbalanced sample sizes after the split.
Assumptions~\ref{ass2} and \ref{ass3} are standard regularity conditions in the literature related to random forests \citep{meinshausen2006quantile,wager2018estimation, GRF2019, CevidMichelNafBuhlmannMeinshausen2022}.

\begin{assumption}
\label{ass4}
For any $\bx, \bx^{\prime}\in\cX$, there exists a constant $C>0$ such that
\begin{align*}
\sup_{y\in\R}\left|F(y \mid \bx)-F\left(y \mid \bx^{\prime}\right)\right| \leq C\left\| \bx-\bx^{\prime} \right\|_1,
\end{align*}
where $\|\cdot \|_1$ denotes the $\ell_1$ norm.
\end{assumption}
Assumption~\ref{ass4} states that the conditional distribution function is Lipschitz continuous with respect to $\bx$ with parameter $C$.

\begin{assumption}
\label{ass5}
For any $\bx\in\cX$, $F(y \mid \bx)$ is strictly monotonically increasing in $y$.
\end{assumption}
Assumptions~\ref{ass4} and \ref{ass5} are used in \citet{meinshausen2006quantile}.

\begin{theorem}
\label{thm:consist}
Under Assumptions 1--5, for any $\bx\in \cX$, the proposed online estimator for VaR with confidence level $\alpha$ satisfies
\begin{align*}
    \hat{v}_\alpha(\bx) \xrightarrow{p} v_\alpha(\bx), \quad n \rightarrow \infty,
\end{align*}
where $n$ is the offline sample size.
\end{theorem}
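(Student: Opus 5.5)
We follow the argument of \citet{meinshausen2006quantile}. Write the QRF weighted empirical conditional distribution function as
\begin{align*}
\hat F(y\mid \bx)=\sum_{i=1}^n w_i(\bx)\mathbf{1}\{Y_i\le y\}.
\end{align*}
The plan has two stages. First, show that $\hat F(y\mid\bx)\xrightarrow{p}F(y\mid\bx)$ for every fixed $y$. Second, convert this distribution-level consistency into quantile consistency using Assumption~\ref{ass5}.

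For the first stage, decompose
\begin{align*}
\hat F(y\mid\bx)-F(y\mid\bx)=\sum_{i=1}^n w_i(\bx)\bigl(\mathbf{1}\{Y_i\le y\}-F(y\mid \bX_i)\bigr)+\sum_{i=1}^n w_i(\bx)\bigl(F(y\mid\bX_i)-F(y\mid\bx)\bigr),
\end{align*}
using $\sum_i w_i(\bx)=1$. The first term is a variance term and the second a bias term.

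\textbf{Variance term.} Condition on $\{\bX_i\}$ and the tree randomness, treating the weights as independent of the $Y_i$, as in \citet{meinshausen2006quantile}. The summands $U_i=\mathbf{1}\{Y_i\le y\}-F(y\mid\bX_i)$ are then independent, mean zero and bounded by $1$. The conditional second moment is therefore at most $\sum_i w_i(\bx)^2\le \max_i w_i(\bx)$. Each tree weight is at most $1/\min_{l,\theta}k_\theta(l)$, so $\max_i w_i(\bx)\to 0$ by Assumption~\ref{ass2}. Chebyshev's inequality then gives convergence to zero in probability.

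\textbf{Bias term.} By Assumption~\ref{ass4}, the bias term is bounded by
\begin{align*}
C\sum_i w_i(\bx)\|\bX_i-\bx\|_1\le C\,\frac1B\sum_b \mathrm{diam}_1\bigl(R_{l(\bx,\theta_b)}\bigr).
\end{align*}
It thus suffices to show that the leaf containing $\bx$ shrinks in every coordinate in probability.

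The shrinkage argument is the main obstacle. We would argue as follows. By Assumption~\ref{ass2}, the leaf has $o(n)$ points. By the balance condition of Assumption~\ref{ass3}, each split keeps at least a fraction $\gamma$ of the points, so the number of splits $K$ along the path to the leaf satisfies $(1-\gamma)^K n\ge \min_{l,\theta} k_\theta(l)$. It also satisfies $\gamma^K n\le o(n)$, so $K\to\infty$. Since every coordinate is chosen with probability at least $C_0$, a binomial/Chernoff bound shows that each coordinate $m$ is split on at least $C_0K/2$ times with probability tending to one.

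Each split in coordinate $m$ leaves the child with at most a fraction $1-\gamma$ of the parent's points. By Assumption~\ref{ass1}, the density of $\bX$ is bounded above and below, so empirical measure proportions translate into Lebesgue-measure proportions along that coordinate. Making this uniform requires a Glivenko--Cantelli/VC argument over rectangles, since leaves are data-dependent. Granting it, the side length in coordinate $m$ is at most a constant times $(1-\gamma)^{C_0K/2}\to0$. Combining these facts yields $\mathrm{diam}_1(R_{l(\bx,\theta)})\xrightarrow{p}0$ uniformly in $\theta$, so the bias term vanishes. If the uniform empirical-process step proves delicate, we would instead invoke Lemma~2 of \citet{meinshausen2006quantile}, whose hypotheses coincide with Assumptions~\ref{ass1}--\ref{ass3}.

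For the second stage, fix $\varepsilon>0$. Assumption~\ref{ass5} gives
\begin{align*}
F(v_\alpha(\bx)-\varepsilon\mid\bx)<\alpha<F(v_\alpha(\bx)+\varepsilon\mid\bx).
\end{align*}
Let $\delta>0$ be the smaller of the two gaps. Apply the first stage at the two points $y=v_\alpha(\bx)\pm\varepsilon$. With probability tending to one we get
\begin{align*}
\hat F(v_\alpha(\bx)-\varepsilon\mid\bx)<\alpha\le\hat F(v_\alpha(\bx)+\varepsilon\mid\bx).
\end{align*}
By the definition of $\hat v_\alpha$ as a generalized inverse and the monotonicity of $\hat F(\cdot\mid\bx)$, this forces $|\hat v_\alpha(\bx)-v_\alpha(\bx)|\le\varepsilon$, which proves the theorem.
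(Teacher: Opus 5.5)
Your proposal is correct, at the same level of rigor as the paper, and shares its skeleton: first consistency of $\hat F(\cdot\mid\bx)$, then inversion via Assumption~\ref{ass5}. The difference is in how you handle the first stage.

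The paper imports Theorem~1 of Meinshausen (2006) as a black box, which gives $\sup_y|\hat F_n(y\mid\bx)-F(y\mid\bx)|\xrightarrow{p}0$. It then shows that the quantile functional is sup-norm continuous at $F(\cdot\mid\bx)$. You instead unpack that theorem into two pieces:
\begin{itemize}
\item a variance term, handled by Chebyshev with $\sum_i w_i(\bx)^2\le 1/\min_{l,\theta}k_\theta(l)$;
\item a bias term, handled by the Lipschitz bound plus leaf shrinkage.
\end{itemize}
You also observe that, since $\hat F(\cdot\mid\bx)$ is monotone, convergence at the two points $v_\alpha(\bx)\pm\varepsilon$ suffices, so no uniform-in-$y$ upgrade is needed.

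Your inversion is also slightly tidier. You get $F(v_\alpha(\bx)-\varepsilon\mid\bx)<\alpha<F(v_\alpha(\bx)+\varepsilon\mid\bx)$ directly from the generalized-inverse definition and right-continuity. The paper instead writes $F(q\mid\bx)=\alpha$, which tacitly uses continuity in $y$. Working on the event $|\hat F-F|<\delta$ also avoids the minor $\le$-versus-$<$ slip in the paper's choice of $\delta$. The paper's route is shorter because all the forest-specific work is delegated to the citation.

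One caveat concerns your variance step: independence of the weights from the $Y_i$ is not implied by Assumptions~1--5. In QRF as defined, splits are chosen using the responses. After conditioning on $\{\bX_i\}$ and the tree structure, the summands $\mathbf{1}\{Y_i\le y\}-F(y\mid\bX_i)$ therefore need not be centered or independent. What you are invoking is an honesty-type condition, which the paper imposes only for Theorem~\ref{theorem:L2}. The paper never confronts this issue because it cites Meinshausen's theorem. To be on the same footing, you should either cite that result for the first stage or add honesty, rather than present the Chebyshev sketch as a proof under Assumptions~1--5.

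A smaller point concerns your shrinkage sketch. The per-split contraction of the side length in a split coordinate is governed by $1-\gamma c_{\min}/c_{\max}$, where $c_{\min},c_{\max}$ are the density bounds of Assumption~\ref{ass1}, not by $1-\gamma$. Your conclusion survives this correction, but falling back on Meinshausen's lemma is the sensible choice there.
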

{
\begin{proof}
In the following, we prove the result by applying the continuous mapping theorem for functionals \citep{wainwright2019high}. To this end, we need to verify two conditions.
With Assumptions 1-5, by Theorem 1 in \citet{meinshausen2006quantile}, we have 
\begin{align*}
\| \hat{F}_n(\cdot \mid \bx) - F(\cdot \mid \bx) \|_{\infty} = \sup_{y\in\R} |\hat{F}_n(y \mid \bx)-F(y \mid \bx) | \xrightarrow{p} 0,\quad n\to\infty,
\end{align*}
where $\hat{F}_n(y \mid \bx)=\sum_{i=1}^n w_i(\bx) \bi \left\{Y_i \leq y\right\}$. Thus, the first condition is checked.
Note that the conditional quantile estimator and true conditional quantile can be rewritten as a quantile functional $Q^{\alpha}$ of $\hat{F}_n(y \mid \bx)$ and ${F}(y \mid \bx)$, respectively, where $Q^\alpha(G)=\inf \{z \in \mathbb{R}: G(z) \geq \alpha\}$.
For convenience, we write $q=v_\alpha(\bx)$ and thus $F(q \mid \bx)=\alpha$.
From Assumption 5, we have that, for any $\epsilon>0$, $F(q-\epsilon \mid \bx) < F(q \mid \bx) < F(q+\epsilon \mid \bx)$.
Let $$\delta= \min\{\alpha-F(q-\epsilon\mid\bx), F(q+\epsilon\mid\bx)-\alpha\}>0.$$ 
If $\| \hat{F}_n(\cdot\mid\bx)-F(\cdot\mid\bx) \|_{\infty} \leq \delta$, then $\sup_{y\in \R} | \hat{F}_n(y\mid \bx) - F(y\mid\bx) | \leq \delta$, and thus for any $z\in\R$, $|\hat{F}_n(z\mid\bx) - F(z\mid\bx)|\leq \sup_{y\in \R}| \hat{F}_n(y\mid\bx) - F(y\mid\bx) |\leq \delta$.
In other words, $-\delta \leq \hat{F}_n(z|\bx) - F(z|\bx) \leq \delta$, that is, $F(z\mid\bx)-\delta \leq \hat{F}_n(z\mid\bx) \leq F(z\mid\bx)+\delta$. 
If $z\leq q-\epsilon$, we have $\hat{F}_n(z\mid\bx)\leq \hat{F}_n(q-\epsilon\mid\bx)\leq F(q-\epsilon\mid\bx) +\delta <\alpha$, where the last inequality comes from the definition of $\delta$. 
Similarly, if $z\geq q+\epsilon$, we have $\hat{F}_n(z\mid\bx)\geq \hat{F}_n(q+\epsilon\mid\bx)\geq F(q+\epsilon\mid\bx) -\delta > \alpha$.
Hence, $q-\epsilon<Q^{\alpha}(\hat{F}_n(y\mid\bx))<q+\epsilon$, that is, $|Q^{\alpha}(\hat{F}_n(y\mid\bx))- q |=|Q^{\alpha}(\hat{F}_n(y\mid\bx))- Q^{\alpha}(F(y\mid\bx)) |<\epsilon$. 
The second condition is verified that $Q^{\alpha}$ is continuous at $F$ with respect to the supremum norm using the definition.
Therefore, the proof is complete.
\end{proof}
}

Theorem \ref{thm:consist} establishes the pointwise consistency of the QRF estimator, ensuring that the estimated conditional quantile converges in probability to the true VaR as the offline sample size increases. While consistency ensures that the estimator is asymptotically correct, it allows for the possibility of rare but arbitrarily large estimation errors and provides no control over the moment of the estimation error, which limits its usefulness for online analysis. To strengthen this result, we impose an additional moment condition and establish convergence in the mean squared sense in Theorem \ref{theorem:L2}.

\begin{assumption}
    \label{ass:6}
    There exist constants $\delta>0$ and $M<\infty$, such that, for all $\bx$,
    \begin{align*}
    \E\!\left[|Y|^{2+\delta}\mid \bX=\bx\right]\le M.
    \end{align*}
\end{assumption}

\begin{theorem}
\label{theorem:L2}
Suppose that Assumptions~1--6 hold. 
Assume further that the tree construction satisfies an \emph{honesty} condition \citep[see, e.g.,][]{athey2016recursive,wager2018estimation}. 
Specifically, the randomization used to determine the tree structure, i.e., the sequence of splits, is independent of the response samples $\{Y_i\}_{i=1}^n$, so that the responses are used only to estimate quantities within the leaves of a fixed, data-independent partition.
Then, we have
\begin{align*}
    \lim_{n \to \infty} 
    \mathbb{E}\!\left[\big(\hat v_\alpha(\bX)-v_\alpha(\bX)\big)^2\right] = 0,
\end{align*}
where $n$ denotes the offline sample size.
\end{theorem}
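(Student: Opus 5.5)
The plan is to upgrade the convergence in probability of Theorem~\ref{thm:consist} to $L^2$ convergence through uniform integrability. Write $D_n(\bX)=\hat v_\alpha(\bX)-v_\alpha(\bX)$, where $\bX$ is a fresh risk factor independent of the offline sample.

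\textbf{Step 1: convergence in probability over the joint law.} Theorem~\ref{thm:consist} gives $D_n(\bx)\xrightarrow{p}0$ for every fixed $\bx\in\cX$. Hence for each $\epsilon>0$ the quantity $p_n(\bx)=\P(|D_n(\bx)|>\epsilon)$ tends to $0$ pointwise. Since $p_n\le 1$, dominated convergence over the law of $\bX$ gives $\P(|D_n(\bX)|>\epsilon)=\E[p_n(\bX)]\to 0$. So $D_n(\bX)\xrightarrow{p}0$ under the joint distribution.

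\textbf{Step 2: a uniform moment bound.} By Vitali's theorem it then suffices to show $\sup_n \E|D_n(\bX)|^{2+\delta'}<\infty$ for some $\delta'>0$. The true quantile is handled by Assumption~\ref{ass:6} and Markov's inequality: for $c>0$,
\[
F(-c\mid\bx)\le M/c^{2+\delta} \quad\text{and}\quad 1-F(c\mid\bx)\le M/c^{2+\delta}.
\]
Choosing $c$ large makes both bounds smaller than $\min(\alpha,1-\alpha)$, so $\sup_{\bx}|v_\alpha(\bx)|<\infty$.

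For the estimator, $\hat v_\alpha(\bx)$ is an order statistic of the responses carrying positive forest weight. Since the weights are nonnegative and sum to one, $|\hat v_\alpha(\bx)|\le \max\{|Y_i|: w_i(\bx)>0\}$. A bound by a maximum is too crude, so I use the weighted empirical distribution instead. The event $\hat v_\alpha(\bx)>t$ with $t>0$ requires $\sum_i w_i(\bx)\bi\{Y_i>t\}\ge 1-\alpha$. By Markov's inequality,
\[
\P\big(\hat v_\alpha(\bx)>t\big)\le \frac{\E\big[\sum_i w_i(\bx)\bi\{Y_i>t\}\big]}{1-\alpha}.
\]
Honesty is exactly what makes the right-hand side tractable. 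Conditional on the partition and the covariates, the weights are fixed and the $Y_i$ are independent with laws $F(\cdot\mid\bX_i)$. Therefore
\[
\E\Big[\sum_i w_i(\bx)\bi\{Y_i>t\}\Big]\le \sup_{\bx'}\big(1-F(t\mid\bx')\big)\le M/t^{2+\delta}.
\]
The lower tail is symmetric, using $\sum_i w_i\bi\{Y_i\le -t\}\ge\alpha$. This yields $\P(|\hat v_\alpha(\bx)|>t)\le C'M/t^{2+\delta}$ uniformly in $n$ and $\bx$.

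This tail bound only gives moments of order strictly less than $2+\delta$. That is enough: pick $\delta'=\delta/2$. Integrating the tail bound shows $\E|\hat v_\alpha(\bX)|^{2+\delta'}$ is bounded uniformly in $n$. Combined with the bounded $v_\alpha$, the family $\{D_n(\bX)^2\}$ is uniformly integrable.

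\textbf{Step 3: conclude.} Convergence in probability together with uniform integrability of $D_n^2$ gives $\E[D_n(\bX)^2]\to0$.

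\textbf{Main obstacle.} The hardest step is the uniform tail bound in Step 2. Without honesty, the weights depend on the $Y_i$ and the Markov argument breaks, so this is where the honesty assumption must be invoked carefully, by conditioning on the tree randomization and the covariates. A minor check is that Theorem~\ref{thm:consist} still applies under honest trees. It does, because Assumptions 1--5 on the partition are unchanged.
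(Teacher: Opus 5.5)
Your proof is correct, and it shares the paper's three-step skeleton: pointwise consistency lifted to a random $\bX$ by dominated convergence, a uniform moment bound obtained by conditioning on the honest partition and the covariates, and then uniform integrability giving $L^2$ convergence. The one real difference is how you control the moments of $\hat v_\alpha$.

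The paper proves a pathwise inequality. With $\beta=\min\{\alpha,1-\alpha\}$, it shows $\sum_i w_i(\bx)\bi\{|Y_i|\ge|\hat v_\alpha(\bx)|\}\ge\beta$ for every realization. This gives $|\hat v_\alpha(\bx)|^{2+\delta}\le\beta^{-1}\sum_i w_i(\bx)|Y_i|^{2+\delta}$. Honesty then yields $\E|\hat v_\alpha(\bX)|^{2+\delta}\le M/\beta$ in one line.

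You instead apply Markov's inequality to the weighted empirical survival function $\sum_i w_i(\bx)\bi\{Y_i>t\}$. That gives only a weak-type bound, $\P(|\hat v_\alpha(\bx)|>t)\le C' M t^{-(2+\delta)}$. You therefore give up an arbitrarily small amount of moment order and must integrate tails to bound $\E|\hat v_\alpha(\bX)|^{2+\delta/2}$. That is still enough for uniform integrability of $D_n(\bX)^2$.

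What each route buys:
\begin{itemize}
\item The paper's route is shorter and yields the full $(2+\delta)$-moment with an explicit constant.
\item Your route uses only the uniform tail condition $\sup_{\bx}\P(|Y|>t\mid\bX=\bx)\le K t^{-q}$ for some $q>2$ and constant $K$. This is formally weaker than Assumption~6.
\item Your bound on $v_\alpha$ via the quantile definition and the paper's Markov argument, $|v_\alpha(\bx)|^{2+\delta}\le M/\beta$, give the same uniform bound.
\end{itemize}
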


\begin{proof}
The proof proceeds in three steps. We first establish convergence in probability at a random covariate $\bX$. 
We then derive uniform $(2+\delta)$-moment bounds for both $v_\alpha(\bX)$ and $\hat v_\alpha(\bX)$.
Finally, we combine these results to conclude $L^2$ convergence via uniform integrability.

\medskip
\noindent\textbf{Step 1: Convergence in probability at random $\bX$.}

Theorem \ref{thm:consist} shows that for any fixed $\bx$,
\begin{align*}
\hat v_\alpha(\bx)\xrightarrow{p} v_\alpha(\bx), \qquad n\to\infty.
\end{align*}
Let $\varepsilon>0$. Note that
\begin{align}
\label{eq:randX_prob_1}
\P\big(|\hat v_\alpha(\bX)-v_\alpha(\bX)|>\varepsilon\big) = \E\Big[\P\big(|\hat v_\alpha(\bX)-v_\alpha(\bX)|>\varepsilon\mid \bX\big)\Big].
\end{align}
For almost every realization of $\bX$, the conditional probability on the right-hand side converges to zero by pointwise consistency.
Since it is bounded between $0$ and $1$, the dominated convergence theorem yields
\begin{align}
\label{eq:conv_prob_randomX_rewrite}
\hat v_\alpha(\bX)-v_\alpha(\bX)\xrightarrow{p}0,\qquad n\to\infty.
\end{align}

\medskip
\noindent\textbf{Step 2: $(2+\delta)$-moment bounds.}

We first derive a moment bound for $v_\alpha(\bX)$. 
Let $p=2+\delta$ and $\beta=\min\{\alpha,1-\alpha\}$. 
By the definition of $v_\alpha(\bx)$,
\begin{align}
\label{eq:tail_lower_v_rewrite}
\P\big(|Y|\ge |v_\alpha(\bx)|\mid \bX=\bx\big)\ge \beta .
\end{align}
On the other hand, Markov's inequality implies
\begin{align*}
\P\big(|Y|\ge |v_\alpha(\bx)|\mid \bX=\bx\big)
\le
\frac{\E\big[|Y|^{p}\mid \bX=\bx\big]}{|v_\alpha(\bx)|^{p}}.
\end{align*}
Combining this inequality with \eqref{eq:tail_lower_v_rewrite} yields
\begin{align*}
|v_\alpha(\bx)|^{p}
\le \frac{1}{\beta}\E\big[|Y|^{p}\mid \bX=\bx\big]
\le \frac{M}{\beta},
\end{align*}
where the last inequality follows from Assumption~\ref{ass:6}. 
Consequently,
\begin{align}
\label{eq:v_moment_rewrite}
\E\big[|v_\alpha(\bX)|^{2+\delta}\big]<\infty.
\end{align}

We next derive a deterministic bound for $\hat v_\alpha(\bx)$.
For any $\bx$, we claim that for every realization of $\{Y_i\}_{i=1}^n$ and $\{w_i(\bx)\}_{i=1}^n$,
\begin{align}
\label{eq:hatv_ineq_rewrite}
|\hat v_\alpha(\bx)|^{p}
\le
\frac{1}{\beta}\sum_{i=1}^n w_i(\bx)\,|Y_i|^{p}.
\end{align}
To see this, note that the definition of $\hat v_\alpha(\bx)$ implies
\begin{align}
\label{eq:beta_mass_rewrite}
\sum_{i=1}^n w_i(\bx)\mathbf 1\big\{|Y_i|\ge |\hat v_\alpha(\bx)|\big\}\ge \beta,
\end{align}
which follows by considering the cases $\hat v_\alpha(\bx)\le 0$ and $\hat v_\alpha(\bx)\ge 0$ separately.
Applying the inequality
\begin{align*}
\mathbf 1\{|a|\ge t\}\le |a|^{p}/t^{p}, \qquad t>0,
\end{align*}
with $t=|\hat v_\alpha(\bx)|$ and combining with \eqref{eq:beta_mass_rewrite} yields \eqref{eq:hatv_ineq_rewrite}.

Taking expectations in \eqref{eq:hatv_ineq_rewrite}, conditional on $\bX$, $\{\bX_i\}_{i=1}^n$, and the tree structure $\theta_b$, honesty implies
\begin{align*}
\E\!\left[\sum_{i=1}^n w_i(\bX)|Y_i|^{p}\ \Big|\ \bX,\{\bX_i\}_{i=1}^n,\theta_b\right]
= \sum_{i=1}^n w_i(\bX)\E\big[|Y_i|^{p}\mid \bX_i\big] \le M.
\end{align*}
Hence,
\begin{align}
\label{eq:hatv_moment_rewrite}
\sup_{n\ge 1}\E\big[|\hat v_\alpha(\bX)|^{2+\delta}\big]\le \frac{M}{\beta}<\infty.
\end{align}

\medskip
\noindent\textbf{Step 3: Uniform integrability and $L^2$ convergence.}

Using the inequality $|a-b|^{p}\le 2^{p-1}(|a|^{p}+|b|^{p})$ with $p=2+\delta$, we obtain
\begin{align*}
\E\big[|\hat v_\alpha(\bX)-v_\alpha(\bX)|^{2+\delta}\big] \le 2^{1+\delta}\Big(\E|\hat v_\alpha(\bX)|^{2+\delta} + \E|v_\alpha(\bX)|^{2+\delta}\Big).
\end{align*}
Combining \eqref{eq:v_moment_rewrite} and \eqref{eq:hatv_moment_rewrite}, we obtain
\begin{align}
\label{eq:ui_bound_rewrite}
\sup_{n\ge 1}\E\big[|\hat v_\alpha(\bX)-v_\alpha(\bX)|^{2+\delta}\big]<\infty.
\end{align}
By Theorem 4.6.2 of \citet{Durrett_2019}, this bound implies that the family
$\{(\hat v_\alpha(\bX)-v_\alpha(\bX))^2\}_{n\ge 1}$ is uniformly integrable.

Together with the convergence in probability established in \eqref{eq:conv_prob_randomX_rewrite},
Theorem 4.6.3 of \citet{Durrett_2019} yields $L^1$ convergence of the squared error.
Consequently,
\begin{align*}
\E\Big[(\hat v_\alpha(\bX)-v_\alpha(\bX))^2\Big]\to 0,\qquad n\to\infty.
\end{align*}
This establishes the desired $L^2$ convergence.
\end{proof}

The additional moment condition in Assumption \ref{ass:6} excludes excessively heavy-tailed response distributions, ensuring that the conditional VaR remains well-behaved and that the quantile estimation error admits a finite second moment. Together with the honesty condition on tree construction, these assumptions yield convergence of the QRF-based VaR estimator in the mean squared sense. More importantly, this stronger form of convergence provides a fundamental building block for establishing the asymptotic validity and now we analyze the conformal QRF estimator $\hat{v}_{\alpha}^{\mathrm{c}}$.

\begin{proposition}
\label{proposition:finit_cvg}
If the samples $\{ (\bX_i, Y_i) \}_{i=1}^{n+1}$ are exchangeable, then
    \begin{align*}
        \P \{ Y_{n+1} \leq \hat{v}^{\mathrm{c}}_{\alpha}(\bX_{n+1}) \} \geq \alpha.
    \end{align*}
\end{proposition}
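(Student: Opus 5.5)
This is the standard split-conformal argument. I condition on the training data indexed by $\cI_1$, and hence on the fitted map $\hat v_\alpha(\cdot)$ (including any tree randomness $\theta_b$). Once this is fixed, the scores $E_i = Y_i - \hat v_\alpha(\bX_i)$ for $i\in\cI_2\cup\{n+1\}$ are images of the pairs $(\bX_i,Y_i)$ under one fixed measurable map. The proposition assumes exchangeability of all of $\{(\bX_i,Y_i)\}_{i=1}^{n+1}$, and $\cI_1$ is chosen independently of the data, so the calibration points together with the test point remain exchangeable conditional on $\{(\bX_i,Y_i)\}_{i\in\cI_1}$. It follows that $\{E_i\}_{i\in\cI_2\cup\{n+1\}}$ is a conditionally exchangeable vector.

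Next I rewrite the target event. Since
\begin{align*}
Y_{n+1}\le \hat v^{\mathrm c}_\alpha(\bX_{n+1}) \iff E_{n+1}\le q_\alpha(E,\cI_2),
\end{align*}
it suffices to lower-bound the probability that the test score does not exceed the calibration quantile. Let $n_2=|\cI_2|$. The empirical quantile has to be taken with the usual finite-sample correction: $q_\alpha(E,\cI_2)$ is the $\lceil \alpha(n_2+1)\rceil$-th smallest calibration score, and it is set to $+\infty$ if $\lceil \alpha(n_2+1)\rceil>n_2$. I will state this convention explicitly. The uncorrected empirical $\alpha$-quantile only achieves coverage of about $\alpha - 1/(n_2+1)$, so the proof relies on this reading of the definition.

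The key step is the rank argument. Let $R$ be the rank of $E_{n+1}$ among the $n_2+1$ exchangeable scores, with ties broken uniformly at random. By exchangeability, $R$ is uniformly distributed on $\{1,\dots,n_2+1\}$. If $R\le k:=\lceil\alpha(n_2+1)\rceil$, then $E_{n+1}$ is at most the $k$-th smallest of the combined scores, which is at most the $k$-th smallest calibration score. Hence the event $\{R\le k\}$ is contained in $\{E_{n+1}\le q_\alpha(E,\cI_2)\}$, and
\begin{align*}
\P\{E_{n+1}\le q_\alpha\mid \{(\bX_i,Y_i)\}_{i\in\cI_1}\}\ge \frac{k}{n_2+1}\ge\alpha .
\end{align*}
Taking expectations over the training data gives the marginal statement.

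The main obstacle is handling ties and the monotone inclusion step carefully. Random tie-breaking, or alternatively counting $\#\{i: E_i\le E_{n+1}\}$, gives the inequality "$\ge$" without any continuity assumption. The other point needing care is that conditioning on $\cI_1$ preserves exchangeability of the remaining points. That holds because $\hat v_\alpha$ is a function of the $\cI_1$ data only, and the scores are obtained by applying this same function to each of the remaining points.
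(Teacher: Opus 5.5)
Your proof is correct and is essentially the argument behind Theorem~1 of \citet{romano2019conformalized}, which the paper simply cites: the one-sided scores $E_i$, $i\in\cI_2\cup\{n+1\}$, are exchangeable given the training fold, and the rank bound gives $\P\{R\le k\}=k/(n_2+1)\ge\alpha$. Your use of the $\lceil\alpha(n_2+1)\rceil$-th order statistic is also right and matches the $(1+1/|\cI_2|)$-inflated quantile in that reference; the paper's phrase ``empirical $\alpha$-quantile'' must be read this way for the proposition to hold.
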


Proposition~\ref{proposition:finit_cvg} follows from Theorem 1 in \citet{romano2019conformalized}.
It provides a finite-sample, model-free marginal coverage probability guarantee for the conformalized online VaR estimator, if we regard $\bX_{n+1}$ as the observed risk factor online. 
In addition, note that exchangeability is weaker than the independent and identically distributed condition in our setting, and therefore, it is easy to satisfy.
Moreover, one may be interested in a stronger, conditional form, which Theorem~\ref{thm:asymp_conditional_conformal_var} formalizes by establishing an asymptotic conditional coverage guarantee.

\begin{theorem}
\label{thm:asymp_conditional_conformal_var}
Under the condition of Theorem \ref{theorem:L2}, assume additionally that the density of the conformity scores $E$ is bounded away from zero in an open neighborhood of zero, and the conditional distribution of $Y$ given $\bX=\bx$ admits a continuous density with respect to the Lebesgue measure.
Then there exists a sequence of random sets $\Lambda_n\subseteq\mathbb{R}^d$ such that $\mathbb{P}(\bX\in\Lambda_n)=1-o(1)$ and
\begin{align*}
    \sup_{\bx\in\Lambda_n}
    \left|
    \P\!\left(Y\le \hat{v}^{\mathrm{c}}_{\alpha,n}(\bx)| \bX=\bx\right)-\alpha
    \right|
    =o_{{p}}(1).
\end{align*}
\end{theorem}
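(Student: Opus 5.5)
Since $\hat v^{\mathrm c}_{\alpha}(\bx)=\hat v_\alpha(\bx)+q_\alpha(E,\cI_2)$, the plan is to separate the error into two pieces: the QRF error $\hat v_\alpha-v_\alpha$, which Theorem~\ref{theorem:L2} controls in $L^2$, and the calibration shift $q_\alpha(E,\cI_2)$, which we will show tends to $0$ in probability. Conditional on the training data $\cI_1$, the estimator is a deterministic function of $\bx$ plus a constant. Hence
\begin{align*}
\P\big(Y\le \hat v^{\mathrm c}_{\alpha,n}(\bx)\mid \bX=\bx\big)-\alpha
=F\big(\hat v_\alpha(\bx)+q_\alpha(E,\cI_2)\mid \bx\big)-F\big(v_\alpha(\bx)\mid\bx\big),
\end{align*}
where we use that, by continuity of the conditional density, $F(v_\alpha(\bx)\mid\bx)=\alpha$. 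It therefore suffices to make $|\hat v_\alpha(\bx)-v_\alpha(\bx)|+|q_\alpha(E,\cI_2)|$ uniformly small on a large set, and then transfer this through the conditional distribution function.

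\textbf{Step 1 (the set $\Lambda_n$).} Let $\eta_n:=\E[(\hat v_\alpha(\bX)-v_\alpha(\bX))^2\mid \cI_1]$. By Theorem~\ref{theorem:L2}, $\E\eta_n\to0$, so $\eta_n=o_p(1)$. Choose a deterministic sequence $r_n\to0$ slowly enough that $\P(\eta_n>r_n^3)\to0$. Define
\begin{align*}
\Lambda_n=\{\bx:|\hat v_\alpha(\bx)-v_\alpha(\bx)|\le r_n\}.
\end{align*}
By Chebyshev's inequality conditional on $\cI_1$, $\P(\bX\notin\Lambda_n\mid\cI_1)\le \eta_n/r_n^2$. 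This bound is at most $r_n$ on an event of probability tending to one, so $\P(\bX\in\Lambda_n)=1-o(1)$.

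\textbf{Step 2 (the calibration shift).} Here we show $q_\alpha(E,\cI_2)\xrightarrow{p}0$, which is the main obstacle. Write $E=Y-v_\alpha(\bX)-(\hat v_\alpha(\bX)-v_\alpha(\bX))$. Since $\P(Y\le v_\alpha(\bX)\mid\bX)=\alpha$, the variable $Y-v_\alpha(\bX)$ has $\alpha$-quantile $0$. The perturbation $\hat v_\alpha(\bX)-v_\alpha(\bX)$ is $o_p(1)$ given $\cI_1$, so the distribution function $G_n$ of $E$ given $\cI_1$ satisfies $G_n(\pm\epsilon)\to F_0(\pm\epsilon)$ up to $o_p(1)$ errors, where $F_0$ is the distribution function of $Y-v_\alpha(\bX)$. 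Combined with the assumed density lower bound near zero (which gives $F_0(\epsilon)-\alpha\ge c\epsilon$ and $\alpha-F_0(-\epsilon)\ge c\epsilon$), this shows the population $\alpha$-quantile of $E$ tends to $0$. Next, the Dvoretzky--Kiefer--Wolfowitz inequality applied conditionally on $\cI_1$, assuming $|\cI_2|\to\infty$, shows that the empirical quantile $q_\alpha(E,\cI_2)$ (including the finite-sample $(1+1/|\cI_2|)$ correction) is within $o_p(1)$ of the population quantile. We then use the same quantile-continuity argument as in the proof of Theorem~\ref{thm:consist}, with $\delta=c\epsilon/2$.

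\textbf{Step 3 (uniform transfer).} On $\Lambda_n$, the total perturbation is at most $r_n+|q_\alpha(E,\cI_2)|=o_p(1)$, uniformly in $\bx$. We need $\sup_{\bx}|F(v_\alpha(\bx)+h\mid\bx)-F(v_\alpha(\bx)\mid\bx)|\to0$ as $h\to0$, i.e.\ equicontinuity of $F(\cdot\mid\bx)$ in $\bx$. Continuity of each conditional density, together with Assumption~\ref{ass4} on the bounded support $\cX$, gives this by a compactness argument: finitely many balls cover $\cX$; on each ball, Assumption~\ref{ass4} lets us replace $F(\cdot\mid\bx)$ by $F(\cdot\mid\bx_j)$ up to error $C\rho$; and each $F(\cdot\mid\bx_j)$ is uniformly continuous because it is a continuous distribution function. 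If the compactness step is delicate, we can alternatively shrink $\Lambda_n$ further to a set on which the conditional density is bounded by a large constant $K_n$, removing only $o(1)$ mass, and choose $r_n$ with $K_n r_n\to0$. Either route yields the supremum bound $o_p(1)$.
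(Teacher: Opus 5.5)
Your argument is correct, but it is organized differently from the paper's proof. The paper gives a one-line reduction: Theorem~\ref{theorem:L2} is used only to supply the consistency hypothesis of Corollary~1 in \citet{sesia2020comparison}, and the conclusion is imported from there.

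You instead unpack that black box into a self-contained proof with three steps:
\begin{itemize}
\item $\Lambda_n$ is built from the conditional $L^2$ error via Chebyshev's inequality.
\item The calibration shift $q_\alpha(E,\cI_2)$ is shown to vanish by combining DKW with the quantile-continuity argument from Theorem~\ref{thm:consist}.
\item The final transfer uses a uniform modulus of continuity for $\{F(\cdot\mid\bx)\}_{\bx\in\cX}$.
\end{itemize}
This skeleton is essentially the argument behind the cited result. Your version makes explicit several things the citation leaves implicit. It handles the one-sided score $E=Y-\hat v_\alpha(\bX)$ directly, whereas \citet{sesia2020comparison} work with two-sided CQR intervals. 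It shows that $|\cI_1|,|\cI_2|\to\infty$ is needed, which the $70/30$ split guarantees. It also pins down where uniformity over $\Lambda_n$ comes from: Assumption~\ref{ass4} together with boundedness of $\cX$ makes the conditional distribution functions equicontinuous. Your covering argument is valid as written, so the fallback with $K_n$ is unnecessary. The citation's only advantage is brevity.

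Two small remarks. First, you apply the density lower bound to the oracle residual $Y-v_\alpha(\bX)$, while the theorem literally places it on $E=Y-\hat v_\alpha(\bX)$. This is harmless: since $F_0(e)=\E[F(v_\alpha(\bX)+e\mid\bX)]$, Assumption~\ref{ass5} already gives $F_0(\epsilon)>\alpha>F_0(-\epsilon)$ for every $\epsilon>0$. That is all the quantile-continuity step needs; the lower bound only makes $\delta$ explicit. Second, the conditioning in Steps~1--2 should include the forest randomization $\theta_b$ along with the data in $\cI_1$. The probability in the statement is then read conditionally on all offline data, as you implicitly do.
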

\begin{proof}
The conclusion follows immediately by combining the $L^2$ convergence established in Theorem \ref{theorem:L2} with the additional regularity assumptions, and applying Corollary 1 of \citet{sesia2020comparison}.
\end{proof}

In addition, the results may be extended to settings without honesty by following what has been done in random forests \citep{wager2015adaptive}.

\section{Numerical Experiments}
\label{sec:numerical}

We consider a realistic portfolio risk measurement problem involving options written on multiple underlying assets, with the setting adapted from \citet{hong2017kernel}.
In particular, the asset price dynamics $\bS(t)$ are governed by a $d$-dimensional geometric Brownian motion with drifts $\mu_i'$, volatilities $\sigma_i$, and pairwise correlations $\rho_{ij}$, $i,j=1,\ldots,d$. 
Specifically, each underlying asset price evolves according to
\begin{align*}
    \frac{\mathrm{d} S_i(t)}{S_i(t)} = \mu_i' \,\mathrm{d}t + \sum_{j=1}^i A_{ij} \,\mathrm{d}B_j(t), 
    \qquad i = 1,\ldots,d,
\end{align*}
where $\{B_j(t)\}_{j=1}^d$ are independent standard Brownian motions, and the lower triangular matrix $A$ satisfies $\Sigma = A A^\top$ with the covariance matrix $\Sigma = (\Sigma_{ij})$ given by $\Sigma_{ij} = \sigma_i \sigma_j \rho_{ij}$.

By It\^{o}'s formula, the asset prices admit the explicit expression
\begin{align*}
S_i(t) = S_i(0)\exp\left\{\left(\mu_i' - \tfrac{1}{2}\sigma_i^2\right)t + \sum_{j=1}^i A_{ij} B_j(t)\right\}, \quad i=1,\ldots,d.
\end{align*}
Note that the drift parameter $\mu'$ is specified as the real-world expected return $\mu$ over the risk horizon $[0,\tau]$, and as the risk-free rate $r$ under the risk-neutral probability measure over $(\tau, T]$. 
For simplicity, we assume all assets share the same parameters and thus omit subscripts in these notations. We set $S(0) = 100$, $\mu = 0.08$, $r = 0.05$, $\sigma = 15\%$, and $\rho = 0.3$.

In our example, we set $d=4$, and the portfolio consists of twenty European call options, with five options written on each underlying asset at strike prices $K_1 = 90$, $K_2 = 95$, $K_3 = 100$, $K_4 = 105$, and $K_5 = 110$. All options are assumed to share a common maturity $T$.
We set $T = 1/12$, i.e., one month, $\tau = 1/52$, i.e., one week, and the monitoring time point for online estimation $u = 1/252$, i.e., one day.
The portfolio value at time $0$, denoted by $V(0)$, can be computed analytically using the Black--Scholes formula. The portfolio value at the risk horizon $\tau$ is given by
\begin{align*}
V(\tau) 
= \mathbb{E}\!\left[
\left.
\sum_{k=1}^d \sum_{j=1}^{5}
\mathrm{e}^{-r(T-\tau)} \big(S_k(T) - K_j\big)^+
\right|
\bX(\tau)
\right],
\end{align*}
where $\bX(\tau)=\bS(\tau)$ in this example.
Accordingly, the portfolio loss at time $\tau$ is defined as
\begin{align*}
L(\tau) = V(0) - V(\tau).
\end{align*}
Specifically, when computing $V(\tau)$, we approximate the conditional expectation using $500$ inner simulation paths, which is considered sufficient to make the inner error negligible relative to the estimation error \citep{jiang2020online,cai2020online}.

To verify the consistency of the QRF estimator in Theorems~\ref{thm:consist} and \ref{theorem:L2}, we examine the estimation errors across varying offline sample sizes. 
For the conformal QRF estimator, we use $70\%$ of the offline samples for training and the remaining $30\%$ for calibration.
Furthermore, we vary the confidence level $\alpha$ to examine the performance of the methods in different situations. 
In particular, $\alpha$ is set to be $90.0\%$, $95.0\%$, $99.0\%$, and $99.5\%$.

We use two metrics to quantify the estimation error. The first is the mean root integrated squared error (MRISE),  which is used in empirical studies of quantile regression \citep[see, e.g.,][]{bondell2010noncrossing}, defined as
\begin{align*}
\mathrm{MRISE}
= \frac{1}{m'} \sum_{i=1}^{m'} \sqrt{\frac{1}{n'} \sum_{j=1}^{n'} \left( v_\alpha(\bx_{i,j}) - \hat v_\alpha(\bx_{i,j}) \right)^2},
\end{align*}
where $n'$ denotes the number of covariate evaluation points and $m'$ denotes the number of replications.
Here, $n'$ and $m'$ are introduced to stabilize the performance evaluation by averaging over multiple covariate points and repeated model training runs, thereby accounting for uncertainty arising from both covariate variation and model learning.
Throughout all experiments, we set $ n' = 1000$ and $ m' = 40$.

The second metric is the mean pinball loss (MPL). 
Compared with MRISE, MPL captures the asymmetric situation, which imposes a larger penalty on underestimation than on overestimation. 
Similar asymmetric metrics are also used in regulatory evaluation approaches \citep{lopez1998methods}.
Thus, an estimator that errs on the conservative side, i.e., slightly overestimates the target, tends to attain a smaller MPL.
More specifically, it is defined as
\begin{align*}
\mathrm{MPL}
= \frac{1}{m'} \sum_{i=1}^{m'}  \left( \frac{1}{n'} \sum_{i=1}^{n'}
\rho_\alpha \left( v_\alpha(\bx_{i,j}) - \hat v_\alpha(\bx_{i,j}) \right) \right), \;
\rho_\alpha(a) =
\begin{cases}
\alpha a, & a > 0, \\
(\alpha - 1) a, & a \le 0 .
\end{cases}
\end{align*}

\begin{figure}[H]
\centering
\includegraphics[width=1\textwidth]{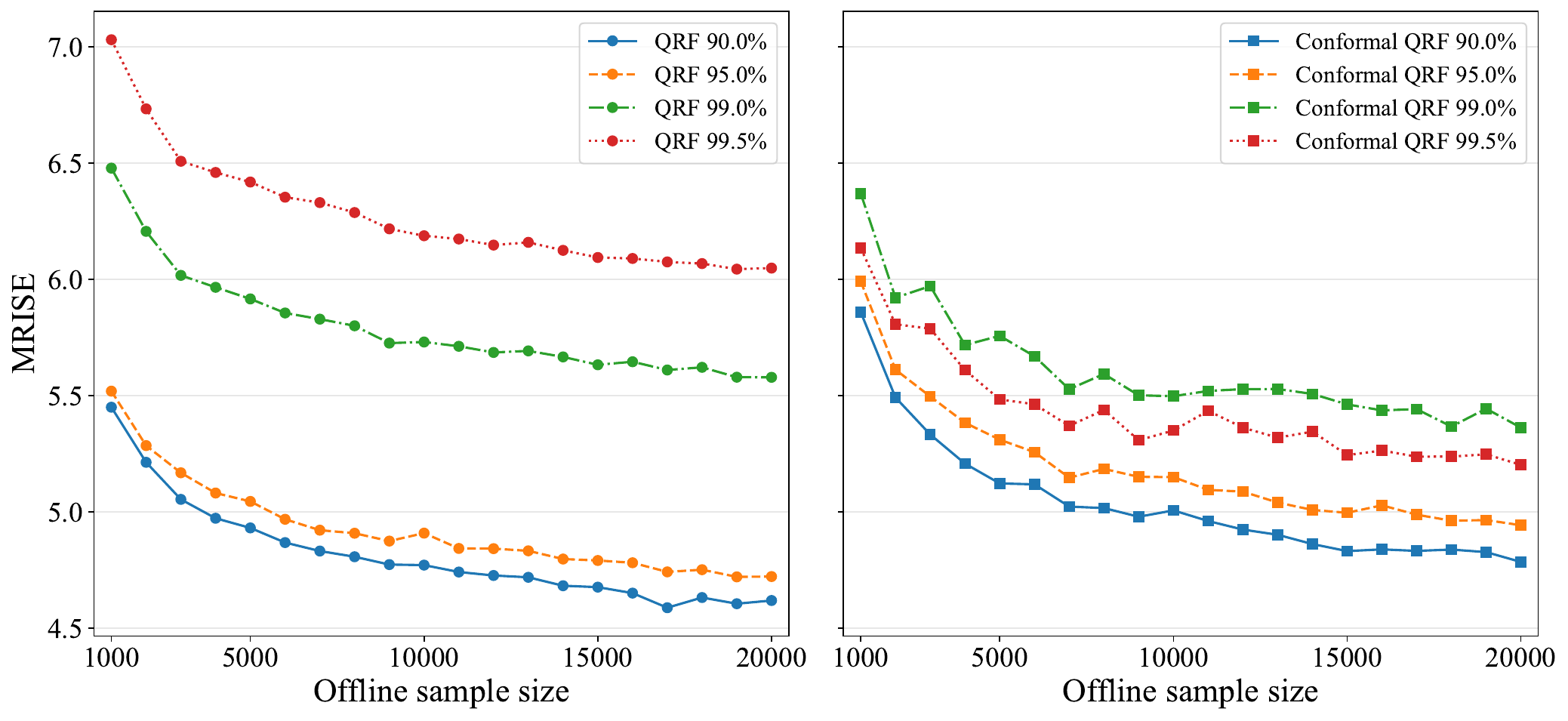}
\caption{MRISE of QRF (left) and Conformal QRF (right) under varying offline sample sizes and quantile levels.}
\label{fig:mrise_qrf_vs_conformal_qrf}
\end{figure}

Figure~\ref{fig:mrise_qrf_vs_conformal_qrf} plots the MRISE of QRF and conformal QRF across different offline sample sizes and VaR confidence levels.
For each confidence level, the MRISE of the QRF estimator decreases and converges as the offline sample size increases, consistent with the theoretical guarantees established in Theorems \ref{thm:consist} and \ref{theorem:L2}.
Moreover, it can also be observed that MRISE of QRF tends to increase as the confidence level rises, which comes from the intrinsic estimation difficulty of tail quantiles when data points in extreme regions are scarce \citep{gnecco2024extremal}.
For conformal QRF, however, the MRISE does not decrease after the calibration step, except for $\alpha=99.5\%$.
This is acceptable because the conformalization is not tailored to such a symmetric metric.
Moreover, it is reassuring to see that the conformal QRF estimator exhibits a convergence trend similar to the QRF estimator, offering empirical evidence for its reliability in practical use.

\begin{figure}[H]
\centering
\includegraphics[width=1\textwidth]{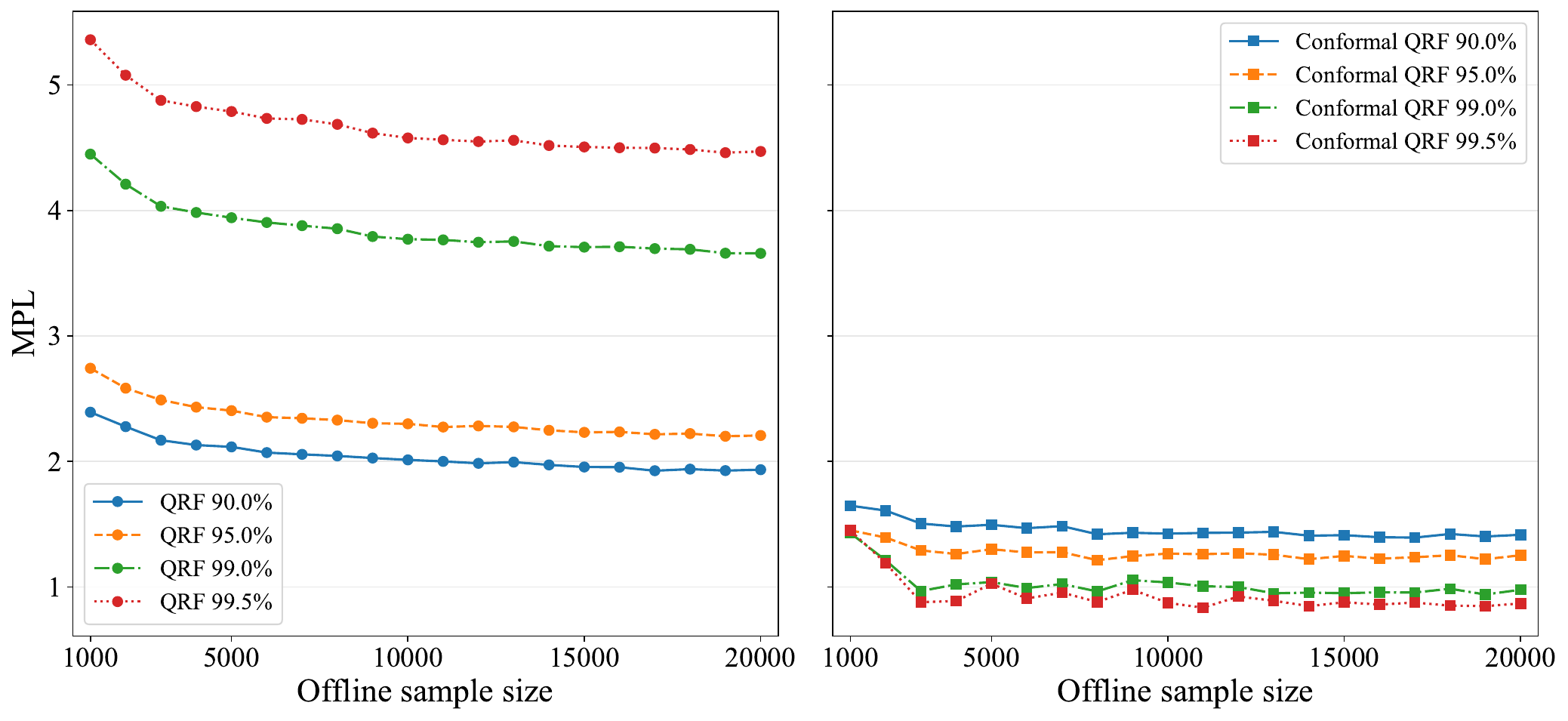}
\caption{MPL of QRF (left) and Conformal QRF (right) under varying offline sample sizes and quantile levels.}
\label{fig:pinball_qrf_vs_conformal_qrf}
\end{figure}

Figure~\ref{fig:pinball_qrf_vs_conformal_qrf} compares the MPL of QRF and conformal QRF over varying offline sample sizes and VaR confidence levels.
It can be observed that conformal QRF yields a smaller MPL than QRF, different from MRISE performance in Figure~\ref{fig:mrise_qrf_vs_conformal_qrf}, confirming the effectiveness of the designed calibration step.
Interestingly, after conformal calibration, as the confidence level becomes more extreme, the MPL decreases more markedly.
It looks a little counterintuitive because, in general, more extreme quantiles are harder to estimate and therefore incur larger errors.
In fact, this pattern comes from the definition of MPL, which penalizes overestimation by the factor $1-\alpha$ and the factor becomes negligible when $\alpha$ is large.
Hence, a more conservative estimator that tends to overestimate can achieve a smaller MPL.
Note that conformal QRF is such an estimator, so it is reasonable that it improves MPL across all confidence levels, with the improvement being most marked at the most extreme level.
In addition, MPL values are not comparable across different confidence levels since the pinball loss depends on $\alpha$, whereas comparisons of MRISE across levels are meaningful.

\begin{figure}[H]
\centering
\includegraphics[width=1\textwidth]{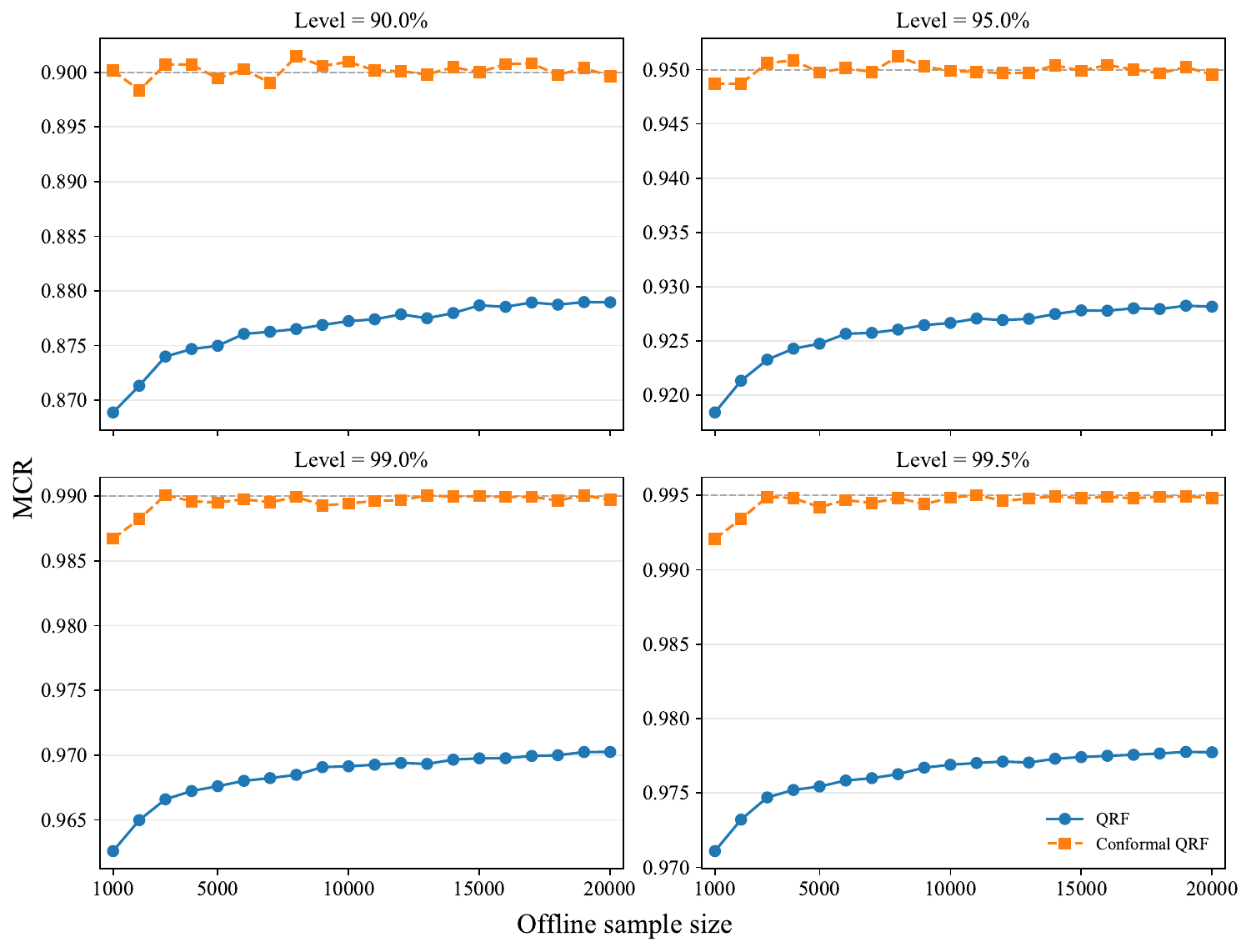}
\caption{MCR of QRF and Conformal QRF under varying offline sample sizes and quantile levels.}
\label{fig:coverage_qrf}
\end{figure}

We then investigate the coverage rate results in Theorem~\ref{thm:asymp_conditional_conformal_var}. 
Specifically, for the evaluated covariate $\bx$, we generate $M = 25{,}000$ independent realizations of the portfolio loss $\{Y^{(m)}\}_{m=1}^M$ using the same data-generating process as in the construction of the offline samples. 
Then the coverage rate (CR) is calculated by 
\begin{align*}
\mathrm{CR}
= \frac{1}{n'} \sum_{i=1}^{n'} \left(\frac{1}{M} \sum_{m=1}^{M}
\mathbf 1 \{ Y_{i}^{(m)} \le \hat v_\alpha(\bx_{i}) \}\right).
\end{align*}
For clarity, we report the mean CR (MCR), obtained by averaging CR over $m'$ covariates, since results for individual covariate points are much noisy despite being consistent with the overall trend.

The MCRs of QRF and conformal QRF with respect to different offline sample sizes for several VaR confidence levels are plotted in Figure~\ref{fig:coverage_qrf}. 
It can be seen that the conformal QRF estimator achieves the target level, which verifies the coverage guarantee in Theorem~\ref{thm:asymp_conditional_conformal_var}.
Moreover, the QRF estimator approaches the target level as the offline sample size increases but remains significantly undercoverage, which is exactly the motivation for us to introduce the conformal calibration step to recover the coverage rate and to mitigate the undesirable underestimation, and can serve as the supporting empirical evidence.

\section{Conclusion}\label{sec:conclusion}
In this paper, we have studied the real-time VaR estimation problem for financial portfolios under the OSOA framework, proposed the QRF estimator and the conformal QRF estimator, and analyzed their theoretical properties. 
We have demonstrated the efficiency and the reliability of our method through numerical results.

\section*{CRediT authorship contribution statement}
\textbf{Du-Yi Wang}: Writing – original draft, Methodology, Software, Visualization, Data Curation.
\textbf{Guo Liang}: Writing – review \& editing, Methodology, Validation.
\textbf{Kun Zhang}: Writing – review \& editing, Conceptualization, Methodology, Funding acquisition, Supervision, Project administration, Validation.
\textbf{Qianwen Zhu}: Writing – original draft, Methodology, Conceptualization, Investigation, Software, Formal analysis.

\section*{Declaration of competing interest}
None declared.

\section*{Declaration of generative AI and AI-assisted technologies in the manuscript preparation process}
During the preparation of this work, the authors used ChatGPT in order to polish the language. After using this tool, the authors reviewed and edited the content as needed and take full responsibility for the content of the published article.

\section*{Data availability statement}
Data will be made available on request.



\bibliographystyle{elsarticle-harv} 
\bibliography{sample}

@article{lopez1998methods,
  title={Methods for evaluating value-at-risk estimates},
  author={Lopez, Jose A},
  journal={Economic Policy Review},
  volume={4},
  number={3},
  pages     = {119--124},
  month     = oct,
  publisher = {Federal Reserve Bank of New York},
  year={1998}
}

@techreport{BCBS1996Backtesting,
  author       = {{Basel Committee on Banking Supervision}},
  title        = {Supervisory framework for the use of {``backtesting''} in conjunction with the internal models approach to market risk capital requirements},
  institution  = {Bank for International Settlements},
  address      = {Basel},
  year         = {1996}
}

@article{Campbell2006,
journal = {Journal of Risk},
publisher = {Risk Publications},
title = {A review of backtesting and backtesting procedures},
author={Sean D. Campbell},
pages={1--7},
volume = {9},
year = {2006},
}

@article{huang2022nonparametric,
  title={Nonparametric mean-lower partial moment model and enhanced index investment},
  author={Huang, Jinbo and Li, Yong and Yao, Haixiang},
  journal={Computers \& Operations Research},
  volume={144},
  pages={105814},
  year={2022},
  publisher={Elsevier}
}

@article{valle2025portfolio,
  title={Portfolio optimisation: Bridging the gap between theory and practice},
  author={Valle, Cristiano Arbex},
  journal={Computers \& Operations Research},
  volume={175},
  pages={106918},
  year={2025},
  publisher={Elsevier}
}

@article{bondell2010noncrossing,
  title={Noncrossing quantile regression curve estimation},
  author={Bondell, Howard D and Reich, Brian J and Wang, Huixia},
  journal={Biometrika},
  volume={97},
  number={4},
  pages={825--838},
  year={2010},
  publisher={Oxford University Press}
}

@article{athey2016recursive,
  title={Recursive partitioning for heterogeneous causal effects},
  author={Athey, Susan and Imbens, Guido},
  journal={Proceedings of the National Academy of Sciences},
  volume={113},
  number={27},
  pages={7353--7360},
  year={2016},
  publisher={National Academy of Sciences}
}

@book{wainwright2019high,
  title={High-dimensional Statistics: A Non-asymptotic Viewpoint},
  author={Wainwright, Martin J},
  volume={48},
  year={2019},
  publisher={Cambridge University Press}
}

@book{Durrett_2019, 
edition={5th}, 
title={Probability: Theory and Examples}, 
publisher={Cambridge University Press}, 
author={Durrett, Rick}, 
year={2019}, 
}

@article{wager2015adaptive,
  title={Adaptive concentration of regression trees, with application to random forests},
  author={Wager, Stefan and Walther, Guenther},
  journal={arXiv preprint arXiv:1503.06388},
  year={2015}
}

@inproceedings{romano2019conformalized,
  title={Conformalized quantile regression},
  author={Romano, Yaniv and Patterson, Evan and Candes, Emmanuel},
  booktitle={Advances in Neural Information Processing Systems},
  volume={32},
  pages={3543--3553},
  year={2019}
}

@article{sesia2020comparison,
  title={A comparison of some conformal quantile regression methods},
  author={Sesia, Matteo and Cand{\`e}s, Emmanuel J},
  journal={Stat},
  volume={9},
  number={1},
  pages={e261},
  year={2020},
  publisher={Wiley Online Library}
}

@article{zhang2025conditional,
  title={Conditional Generative Modeling for Enhanced Credit Risk Management in Supply Chain Finance},
  author={Zhang, Qingkai and Hong, L Jeff and Yan, Houmin},
  journal={Naval Research Logistics, Publish Online},
  year={2025},
  publisher={Wiley Online Library}
}

@article{CevidMichelNafBuhlmannMeinshausen2022,
  title   = {Distributional Random Forests: Heterogeneity Adjustment and Multivariate Distributional Regression},
  author  = {Cevid, Domagoj and Michel, Loris and N{\"a}f, Jeffrey and B{\"u}hlmann, Peter and Meinshausen, Nicolai},
  journal = {Journal of Machine Learning Research},
  volume  = {23},
  number  = {333},
  pages   = {1--79},
  year    = {2022}
}

@article{wager2018estimation,
  title={Estimation and inference of heterogeneous treatment effects using random forests},
  author={Wager, Stefan and Athey, Susan},
  journal={Journal of the American Statistical Association},
  volume={113},
  number={523},
  pages={1228--1242},
  year={2018},
  publisher={Taylor \& Francis}
}

@inproceedings{grinsztajn2022tree,
  title={Why do tree-based models still outperform deep learning on typical tabular data?},
  author={Grinsztajn, L{\'e}o and Oyallon, Edouard and Varoquaux, Ga{\"e}l},
  booktitle={Advances in Neural Information Processing Systems},
  volume={35},
  pages={507--520},
  year={2022}
}

@article{taylor2000quantile,
  title={A quantile regression neural network approach to estimating the conditional density of multiperiod returns},
  author={Taylor, James W},
  journal={Journal of Forecasting},
  volume={19},
  number={4},
  pages={299--311},
  year={2000},
  publisher={Wiley Online Library}
}

@book{jorion2006,
  title={Value at Risk: The New Benchmark for Managing Financial Risk},
  author={Philippe Jorion},
  year={2006},
  publisher = {McGraw-Hill},
  address   = {New York, NY},
  edition   = {3rd}
}

@article{koenker1994quantile,
  title={Quantile smoothing splines},
  author={Koenker, Roger and Ng, Pin and Portnoy, Stephen},
  journal={Biometrika},
  volume={81},
  number={4},
  pages={673--680},
  year={1994},
  publisher={Oxford University Press}
}

@article{koenker1978regression,
  title={Regression quantiles},
  author={Koenker, Roger and Bassett Jr, Gilbert},
  journal={Econometrica},
  volume  = {46},
  number  = {1},
  pages={33--50},
  year={1978},
  publisher={JSTOR}
}

@article{yu1998local,
  title={Local linear quantile regression},
  author={Yu, Keming and Jones, MC1614628},
  journal={Journal of the American Statistical Association},
  volume={93},
  number={441},
  pages={228--237},
  year={1998},
  publisher={Taylor \& Francis}
}

@article{friedman2001greedy,
  title={Greedy function approximation: A gradient boosting machine},
  author={Friedman, Jerome H},
  journal={The Annals of Statistics},
  volume  = {29},
  number  = {5},
  pages={1189--1232},
  year={2001},
  publisher={JSTOR}
}

@article{broadie2015regression,
  title={Risk Estimation via Regression},
  author={Broadie, Mark and Du, Yiping and Moallemi, Ciamac C},
  journal={Operations Research},
  volume={63},
  number={5},
  pages={1077--1097},
  year={2015},
  publisher={INFORMS}
}

@article{gordy2010nested,
  title={Nested Simulation in Portfolio Risk Measurement},
  author={Gordy, Michael B and Juneja, Sandeep},
  journal={Management Science},
  volume={56},
  number={10},
  pages={1833--1848},
  year={2010},
  publisher={INFORMS}
}

@article{zhang2022bootstrap,
  title={Bootstrap-Based Budget Allocation for Nested Simulation},
  author={Zhang, Kun and Liu, Guangwu and Wang, Shiyu},
  journal={Operations Research},
  volume={70},
  number={2},
  pages={1128--1142},
  year={2022},
  publisher={INFORMS}
}

@article{hong2017kernel,
  title={Kernel Smoothing for Nested Estimation with Application to Portfolio Risk Measurement},
  author={Hong, L Jeff and Juneja, Sandeep and Liu, Guangwu},
  journal={Operations Research},
  volume={65},
  number={3},
  pages={657--673},
  year={2017},
  publisher={INFORMS}
}

@article{lai2024simulating,
  title={Simulating Confidence Intervals for Conditional Value-at-Risk via Least-Squares Metamodels},
  author={Lai, Qidong and Liu, Guangwu and Zhang, Bingfeng and Zhang, Kun},
  journal={INFORMS Journal on Computing},
  volume={37},
  number={4},
  pages = {1087-1105},
  year={2024},
  publisher={INFORMS}
}

@article{wang2024smooth,
author = {Wang, Wenjia and Wang, Yanyuan and Zhang, Xiaowei},
title = {Smooth Nested Simulation: Bridging Cubic and Square Root Convergence Rates in High Dimensions},
journal = {Management Science},
volume = {70},
number = {12},
pages = {9031-9057},
year = {2024}
}

@inproceedings{cai2020online,
  title={Online risk measure estimation via natural gradient boosting},
  author={Cai, Xiaoting and Yang, Yang and Jiang, Guangxin},
  booktitle={2020 Winter Simulation Conference (WSC)},
  pages={2341--2352},
  year={2020},
  organization={IEEE}
}

@article{lin2025contextual,
  title={Contextual Strongly Convex Simulation Optimization: Optimize then Predict with Inexact Solutions},
  author={Lin, Nifei and Luo, Heng and Hong, L Jeff},
  journal={arXiv preprint arXiv:2512.06270},
  year={2025}
}

@article{gnecco2024extremal,
  title={Extremal random forests},
  author={Gnecco, Nicola and Terefe, Edossa Merga and Engelke, Sebastian},
  journal={Journal of the American Statistical Association},
  volume={119},
  number={548},
  pages={3059--3072},
  year={2024},
  publisher={Taylor \& Francis}
}

@article{GRF2019,
author = {Susan Athey and Julie Tibshirani and Stefan Wager},
title = {{Generalized random forests}},
volume = {47},
journal = {The Annals of Statistics},
number = {2},
publisher = {Institute of Mathematical Statistics},
pages = {1148--1178},
year = {2019}
}

@inproceedings{duan2020ngboost,
  title={Ngboost: Natural gradient boosting for probabilistic prediction},
  author={Duan, Tony and Anand, Avati and Ding, Daisy Yi and Thai, Khanh K and Basu, Sanjay and Ng, Andrew and Schuler, Alejandro},
  booktitle={International Conference on Machine Learning},
  pages={2690--2700},
  year={2020},
  organization={PMLR}
}

@article{velthoen2023gradient,
  title={Gradient boosting for extreme quantile regression},
  author={Velthoen, Jasper and Dombry, Cl{\'e}ment and Cai, Juan-Juan and Engelke, Sebastian},
  journal={Extremes},
  volume={26},
  number={4},
  pages={639--667},
  year={2023},
  publisher={Springer}
}

@article{koenker2017quantile,
  title={Quantile regression: 40 years on},
  author={Koenker, Roger},
  journal={Annual Review of Economics},
  volume={9},
  number={1},
  pages={155--176},
  year={2017},
  publisher={Annual Reviews}
}

@article{meinshausen2006quantile,
  title={Quantile regression forests.},
  author={Meinshausen, Nicolai},
  journal={Journal of Machine Learning Research},
  volume={7},
  number={6},
  pages   = {983--999},
  year={2006}
}

@article{angelopoulos2023conformal,
  title={Conformal prediction: A gentle introduction},
  author={Angelopoulos, Anastasios N and Bates, Stephen},
  journal={Foundations and Trends{\textregistered} in Machine Learning},
  volume={16},
  number={4},
  pages={494--591},
  year={2023},
  publisher={Now Publishers, Inc.}
}

@article{keslin2025ranking,
  title={Ranking and contextual selection},
  author={Keslin, Gregory and Nelson, Barry L and Pagnoncelli, Bernardo and Plumlee, Matthew and Rahimian, Hamed},
  journal={Operations Research},
  volume={73},
  number={5},
  pages={2695--2707},
  year={2025},
  publisher={INFORMS}
}

@article{jiang2024real,
  title={Real-Time Derivative Pricing and Hedging with Consistent Metamodels},
  author={Jiang, Guangxin and Hong, L Jeff and Shen, Haihui},
  journal={INFORMS Journal on Computing},
  volume={36},
  number={5},
  pages={1168--1189},
  year={2024},
  publisher={INFORMS}
}

@article{jiang2020online,
  title={Online risk monitoring using offline simulation},
  author={Jiang, Guangxin and Hong, L Jeff and Nelson, Barry L},
  journal={INFORMS Journal on Computing},
  volume={32},
  number={2},
  pages={356--375},
  year={2020},
  publisher={INFORMS}
}

@article{hong2019offline,
  title={Offline simulation online application: A new framework of simulation-based decision making},
  author={Hong, L Jeff and Jiang, Guangxin},
  journal={Asia-Pacific Journal of Operational Research},
  volume={36},
  number={06},
  pages={1940015},
  year={2019},
  publisher={World Scientific}
}

@inproceedings{luo2024reliable,
  title={Reliable Online Decision Making with Covariates},
  author={Luo, Heng and Liang, Zhiyang and Hong, L Jeff},
  booktitle={2024 Winter Simulation Conference (WSC)},
  pages={3265--3276},
  year={2024},
  organization={IEEE}
}

@inproceedings{Mohri2024,
  title = 	 {Language models with conformal factuality guarantees},
  author =       {Mohri, Christopher and Hashimoto, Tatsunori},
  booktitle = 	 {International Conference on Machine Learning},
  pages = 	 {36029--36047},
  year = 	 {2024},
  volume = 	 {235},
  series = 	 {Proceedings of Machine Learning Research},
  month = 	 {21--27 Jul},
  publisher =    {PMLR},
}

@inproceedings{Cherian2024,
  title={Language models with conformal factuality guarantees},
  author={Cherian, John and Gibbs,  Isaac and Candes, Emmanuel},
  booktitle={Advances in Neural Information Processing Systems},
  year={2024},
  pages={114812-114842}
}






\end{document}